\makeindex \setcounter{tocdepth}{2}
\theoremstyle{plain}
\newtheorem{theorem}{Theorem}[section]
\newtheorem{proposition}[theorem]{Proposition}
\theoremstyle{definition}
\newtheorem{definition}[theorem]{Definition}
\newtheorem{remark}[theorem]{Remark}
\newtheorem{example}[theorem]{Example}
\def\bR{\mathbb{R}}
\def\cB{\mathcal{B}}
\def\cC{\mathcal{C}}
\def\cD{\mathcal{D}}
\def\cE{\mathcal{E}}
\def\cL{\mathcal{L}}
\def\cM{\mathcal{M}}
\def\cN{\mathcal{N}}
\def\cS{\mathcal{S}}
\def\cV{\mathcal{V}}
\def\fC{\mathfrak{C}}
\def\fb{\mathfrak{b}}
\def\fc{\mathfrak{c}}
\def\fa{\mathfrak{a}}
\def\fd{\mathfrak{d}}
\def\fg{\mathfrak{g}}
\def\fh{\mathfrak{h}}
\def\fe{\mathfrak{e}}
\def\ff{\mathfrak{f}}
\def\DIS{\mathrm{DIS}}
\def\abvg{\mathrm{AVG}_{\mathcal{E}_0(X), \mathcal{M}}}
\def\card{\mathrm{card}}
\begin{document}

\title[Local geometry of hyperedges]{A local geometry of hyperedges in hypergraphs, and its applications to social networks}

\author{Dong Quan Ngoc Nguyen and Lin Xing}

\date{September 29, 2020}

\address{Department of Applied and Computational Mathematics and Statistics \\
         University of Notre Dame \\
         Notre Dame, Indiana 46556, USA }

\email{\href{mailto:dongquan.ngoc.nguyen@nd.edu}{\tt dongquan.ngoc.nguyen@nd.edu}}

\urladdr{http://nd.edu/~dnguye15}

\email{\href{mailto:lxing@nd.edu}{\tt lxing@nd.edu}}

\maketitle

\tableofcontents

\begin{abstract}

In many real world datasets arising from social networks, there are hidden higher order relations among data points which cannot be captured using graph modeling. It is natural to use a more general notion of hypergraphs to model such social networks. In this paper, we introduce a new local geometry of hyperdges in hypegraphs which allows to capture higher order relations among data points. Furthermore based on this new geometry, we also introduce new methodology--the nearest neighbors method in hypegraphs--for analyzing datasets arising from sociology.

\end{abstract}

\section{Introduction}

One of the challenges in the modern age is to classify data arising from many resources; for example, following the rapid developments of several areas in mathematics, a large number of publications in mathematics creates a tremendous amount of data, which signifies useful information such as relationship (or collaborations) among authors and their publications, and their influences on development of mathematics. It is often the case that analyzing such data is not straightforward, and very difficult task because of the extremely fast growth of relations among data, and of data itself.

One of the main problems in machine learning is weight (respectively, label) prediction problem in which each instance is associated with a set of weights (respectively, labels), and the aim is to propose predictive models for predicting weights (respectively, labels) for unobserved data. These learning problem have important applications in many areas such as protein function classification \cite{EW}, text categorization \cite{YYTK}, and semantic scene classification \cite{BLSB}, to cite a few. For networks which can be modeled as graphs, many approaches to prediction problem were proposed (see, for example, \cite{SSSF}, \cite{KJS}, \cite{KITM}, \cite{TMF}, \cite{US},  \cite{ZZ}).  

In real world datasets, for example, in social networks, it often occurs that \textit{higher order relations} among data points are present which can not be captured because graph modeling of such datasets only signifies binary relation. So it is natural to consider weight (respectively, label) prediction problem for a generalization of graphs--hypergraphs. Recall that a hypergraph $X$ is a pair $(\cV(X), \cE(X))$, where $\cV(X)$ is the set of data points (called vertices of $X$), and $\cE(X)$ is a subset of the power set of $\cV(X)$ which represents \textit{higher order relations among data points}. Each element in $\cE(X)$ is called a \textit{hyperedge}. Since each hyperedge can consist of a collection of data points, it captures the higher order correlation information among these data points. Recent trends in learning problem have been focused on hypergraph datasets (see, for example, \cite{ABB}, \cite{ZHS})

In this paper, we propose a geometric approach for studying weight (respectively, label) prediction problem on hypergraph data which will be used to apply for analyzing networks arising from sociology. Traditionally for datasets which can be modeled as graphs, there is a standard method called \textit{$k$ nearest neighbors} which applies to weight (respectively, label) prediction problem. The classical $k$ nearest neighbors method relies on the Euclidean metrics. In contrast with many traditionally geometric approaches based on the Euclidean metrics (or distances), we propose a new methodology of nearest neighbors in hypergraphs, using \textit{metrics modulo equivalence relations} which is a weaker notion than that of usual metrics, but it seems natural to use such metrics in learning data. Equivalence relation signifies the correlation among data points, and a metric modulo such an equivalence relation signifies how much different among data points are, with respect to such correlation information. In this work, we suggest that equivalence relation is a natural mathematical notion for modeling correlation information in datasets, and that metrics modulo equivalence relations are more suitable for learning correlation in datasets. 

The structure of our paper is as follows. In Subsection \ref{subsec-metric-spaces}, we introduce a notion of metric spaces with respect to equivalence relations. In Subsection \ref{subsection-methodology}, based on the notion of metric spaces with respect to equivalence relation, we propose our methodology for studying weight (respectively, label) prediction problem. In Subsection \ref{subsection-metrics-on-hyperedges}, we construct a class of metrics modulo equivalence relations on the set of hyperedges of a hypergraph which can be combined with the methodology proposed in Subsection \ref{subsection-methodology} for studying prediction problems. In Section \ref{section-experimental-analysis}, we apply our methodology to real world datasets from sociology, and report our experimental results.


\section{A class of metrics modulo certain equivalence relations for the set of hyperedges of a hypergraph, and methodologies}

In \textit{weight (respectively, label) prediction problem}, one is given a dataset consisting of data points $x_1, \ldots, x_n$ such that there is a subset, say $\{x_{i_1}, \ldots, x_{i_k}\}$, each of whose element is associated to a weight (respectively, label). The aim is to propose a \textit{predictive model} for predicting the weight (respectively, label) of each element in the set $\{x_1, \ldots, x_n\} \setminus \{x_{i_1}, \ldots, x_{i_k}\}$. In this paper, we propose a new approach using insights from metric geometry to weight (respectively, label) prediction problem for datasets which can be modeled as \textit{hypergraphs}.

A hypergraph $X$ is a pair $(\cV(X), \cE(X))$, where $\cV(X)$ is the set of points, called \textit{vertices of $X$}, and $\cE(X)$ is a subset of the power set of $\cV(X)$, each of whose elements is called a \textit{hyperedge of $X$}. Each hyperedge $\fe$ is a subset of $\cV(X)$. If $\fe$ has exactly $m$ elements, $\fe$ is called an $m$-hyperedge of $X$. In this paper, we are interested in weight (respectively, label) prediction problem for hypergraphs, and so the following notion is natural for modeling such datasets.

\begin{definition}
(incomplete hypergraphs)
\label{d-incomplete-hypergraphs}

Let $X = (\cV(X), \cE(X))$ be a hypergraph, and let $A$ be either an interval $[a, b]$ in $\bR$ or a set of labels $\{1, \ldots, q\}$ for some positive integer $q$. $X$ is called an \textbf{incomplete hypergraph with respect to $A$} if there exists a map $F : \cE_0(X) \to A$ for some subset $\cE_0(X)$ of $\cE(X)$.

\end{definition}

\begin{remark}

In this paper, we consider two types of incomplete hypergraphs which are specified in the following.

\begin{itemize}

\item [(i)] In weight prediction problem, $A$ is an interval $[a, b] \subset \bR$, and a subset $\cE_0(X)$ of the set of hyperedges in $X$ is given such that there is a map $W : \cE_0(X) \to [a, b]$. In this case $W$ is the map $F$ in Definition \ref{d-incomplete-hypergraphs}. Each value $W(\fe)$ for $\fe \in \cE_0(X)$ is called the weight of $\fe$. Weight prediction problem asks for a \textit{predictive model} of $\overline{W}$ such that $\overline{W}$ is a map from $\cE(X)$ to $[a, b]$ for which 
\begin{align*}
\overline{W}(\fe_0) = W(\fe_0)
\end{align*}
for every $\fe_0 \in \cE_0(X)$.

\item [(ii)] In label prediction problem, $A$ is a set of labels $\{1,2, \ldots, q$ for some positive integer $q$, and a subset $\cE_0(X)$ of the set of hyperedges in $X$ is given such that there is a map $L : \cE_0(X) \to [a, b]$. In this case $L$ is the map $F$ in Definition \ref{d-incomplete-hypergraphs}. Each value $l(\fe)$ for $\fe \in \cE_0(X)$ is called the label of $\fe$. Label prediction problem asks for a \textit{predictive model} of $\overline{L}$ such that $\overline{L}$ is a map from $\cE(X)$ to $[a, b]$ for which 
\begin{align*}
\overline{L}(\fe_0) = L(\fe_0)
\end{align*}
for every $\fe_0 \in \cE_0(X)$.

\end{itemize}

\end{remark}

In classical weight (respectively, label) prediction problem for datasets which can be modeled as graphs, one can apply the classical \textit{$k$ nearest neighbors method}. The main idea of the classical $k$ nearest neighbors method is that a dataset $X = \{x_1, \ldots, x_n\}$ is equipped with a Euclidean metric (or distance), say $d$. For each $x \in X$, let $x_{i_1}, \ldots, x_{i_k}$ be the $k$ nearest points from $X$ to $x$ with respect to the metric $d$ such that
\begin{align*}
d(x, x_{i_1}) \le d(x, x_{i_2}) \le \cdots \le d(x, x_{i_k}).
\end{align*}
Then the $k$ nearest neighbors method expect that the weight (respectively, label) of $x$ should be in proximity with the $x_{i_m}$ for $1 \le m \le k$. 

Our main contribution in this paper is to introduce modified $k$ nearest neighbors methods for learning hypergraphs, especially for weight (respectively, label) prediction problem. In order to do that, we introduce a class of \textit{metrics modulo equivalence relations} on the set of hyperedges in a hypergraph. Note that in contrast with the usual metric, a metric $d$ modulo an equivalence relation $\cong$ allows that $d(\fe, \ff) = 0$ as long as $\fe \cong \ff$ (which means that $a$ is equivalent to $b$ with respect to certain properties, for example, similar weights or labels.) Thus such a metric is more suitable for studying weight (respectively, label) prediction problem since two distinct points in a dataset may have the same weight (respectively, label). In this case, the equivalence relation $\cong$ signifies the relation that certain distinct points in datasets share similar properties with respect to weight (respectively, label) map, and $d(\fe, \ff) = 0$ encodes the information that the points $\fe, \ff$ are expected to have the same weight (respectively, label). 

We first introduce a notion of metric spaces modulo equivalence relations which is used in many places in this paper.

\subsection{Metric spaces modulo equivalence relations}
\label{subsec-metric-spaces}

We present in this subsection a notion of metric spaces modulo equivalence relations. We also explain why this notion is suitable for applying to geometrical structures of hypergraphs. We first recall a notion of equivalence relations on sets.

\begin{definition}
\label{Def-equivalence-relation}
(Equivalence relation)

Let $X$ be a set. An \textit{equivalence relation}, denoted by $\cong$, on $X$ is a subset of $X \times X$ such that the following are true: 
\begin{itemize}

\item [(i)] (\textbf{Reflexivity}) $(a, a) \in \; \cong$ for every $a \in X$.

\item [(ii)] (\textbf{Symmetry}) $(a, b) \in \; \cong$ if and only if $(b, a) \in \;\cong$.

\item [(iii)] (\textbf{Transitivity}) if $(a, b) \in \; \cong$ and $(b, c) \in \;\cong$ then $(a, c) \in \; \cong$. 

\end{itemize} 

When $(a, b) \in \; \cong$, we say that $a$ is $\cong$--equivalent to $b$. Throughout this paper, in order to signify this relation, we write $a \cong b$ whenever $(a, b) \in \; \cong$. 

\end{definition}

An equivalence relation $\cong$ on a set provides a way to identify \textit{similar} elements in the set. Equivalently if one can find a \textit{measurement} to measure how similar elements in a set are, then one can modify this measurement to introduce an equivalence relation on the set. 

Reflexivity in Definition \ref{Def-equivalence-relation} says that an element in $X$ should be naturally considered to be similar to itself. If an element $a$ is similar to an element $b$, then $b$ should be naturally considered to be similar to $a$, which is exactly the symmetry condition in Definition \ref{Def-equivalence-relation}. For the last condition, Transitivity, if an element $a$ is similar to an element $b$ and $b$ is similar an element $c$, it is natural to view that $a$ is similar to $c$. 

Equivalence relation is a weaker notion than that of the identity relation which is more suitable and natural to study hypergraph data from the metric-geometry viewpoint. The main aim and intuition behind our paper is that in order to study questions in machine learning on a hypergraph $X$, it is not necessary to construct a metric on the hypergraph, i.e., a mapping $d : X \times X \to \bR_{\ge 0}$ which satisfies similar conditions as the usual absolute value in $\bR$, for example, $d(x, y) = 0$ if and only if $x = y$. Instead one only needs to construct a metric up to an equivalence relation ``$\cong$'', i.e., a mapping $d : X \times X \to \bR_{\ge 0}$ which satisfies all the conditions of a metric except the identity relation replaced by a weaker condition that $d(x, y) = 0$ if and only if $x \cong y$. It turns out that such a metric exists naturally in a very general class of hypergraphs, and can be exploited to introduce several modified methologies in machine learning which apply to weight prediction problem, and multi-label classification problem. Especially in this paper, we introduce a class of metrics for the set of hyperedges of a hypergraph which will be used to provide a modifed kNN for predicting labels of hyperedges. Up to the knowledge of the authors, this paper is the first one which realizes the existence of such metrics for the \textit{set of hyperedges} in a hypergraph. In order to obtain such metrics, we introduce a notion of neighbors of hyperedges in a hypergraph which contains a topological feature of the hyperedges. It seems to the authors that this notion of neighbors of hyperedges has never been introduced before in literature. 

We recall the notion of metrics modulo equivalence relation on a set.

\begin{definition}
\label{def-metric}
(Metric modulo an equivalence relation)

Let $X$ be a set, and $\cong$ an equivalence relation on $X$. A mapping $d : X \times X \to \bR$ is said to be a metric on $X$ modulo the equivalence relation $\cong$ if the following condition are satisfied:
\begin{itemize}

\item[(i)] $d(a, b) \ge 0$ for all $a, b \in X$.

\item [(ii)] $d(a, b) = 0$ if and only if $a \cong b$.

\item [(iii)] (\textbf{Symmetry}) $d(a, b) = d(b, a)$ for all $a, b \in X$.

\item [(iv)] (\textbf{Triangle inequality}) for any $a, b ,c \in X$, 
$$d(a, b) \le d(a, c) + d(c, b).$$

\end{itemize}

\end{definition}

A metric modulo an equivalence relation $\cong$ on a set $X$ acts almost like a metric. The only difference between a metric modulo an equivalence relation and a metric on a set is that condition (ii) in Definition \ref{def-metric} is replaced by a stronger condition that $d(a, b) = 0$ if and only if $a = b$. But in studying some properties, say $(P_i)_i$, associated to a given dataset, where each $P_i$ denotes a property of interest, it is often the case that distinct elements in the dataset share exactly the same properties $(P_i)_i$. In this case, it is natural to view that the \textit{distance} between these distinct elements as zero since they are considered to be \textit{equivalent} with respect to the properties $(P_i)_i$. Note that if one uses a usual metric on this dataset, then one cannot identify similarities among distinct elements sharing the same properties. Thus it is more natural to use a metric modulo an equivalence relation on the dataset to study the structure of the dataset related to a given set of properties. From this point of view, one can apply this idea to a variety of problems such as weight prediction, or multi-label classification as shown in this paper.

\subsection{Methodology}
\label{subsection-methodology}

In this subsection, we introduce our methodology for weight (respectively, label) prediction problem for hypergraph datasets. Throughout this subsection, let $X = (\cV(X), \cE(X))$ be a hypergraph. The main contributions of our work are the constructions of several special metric structures of the set of hyperedges $\cE(X)$. The main motivation for equipping $\cE(X)$ with such metric structures is that we want to modify the classical $k$ nearest neighbors (kNN) method (which only works if the underlying space is Euclidean), and apply it to the weight (respectively, label) prediction problem for elements in $\cE(X)$. In this work, we introduce two different paths to reach to the modification of classical kNN method, the first one called \textit{modified kNN} (which directly use the metrics we construct on $\cE(X)$), and the second one called \textit{embedded kNN} (which instead of using the metrics, we only need to make use of the local geometry around each hyperedge in $\cE(X)$, which will be also introduced also in this work.) In the following, we describe in detail our modified and embedded kNN methods.

\subsubsection{Modified kNN methods}

Assume that there is a metric modulo an equivalence relation $\cong$, denoted by $d$ on $\cE(X)$.

$\star$ \textit{Weight Prediction Problem.}

In the weight prediction problem, we assume that there is a subset $\cE_0(X)$ of $\cE(X)$ such that there is a map $W : \cS_0 \to [a, b]$ for some interval $[a, b]$ in $\bR$. The value $W(\fe)$ is called the \textit{weight of $\fe$}. The aim of the weight prediction problem is to predict what possible values of $W(\fe)$ with $\fe \in \cE(X)\setminus \cE_0(X)$ are. We propose a modified kNN method for predicting weights of elements in $\cE(X) \setminus \cE_0(X)$ as follows. 

Fix a positive integer $k \ge 1$. Take an arbitrary element $\fe \in \cE(X)\setminus \cE_0(X)$. Let $\DIS(\fe)$ denote the set of all distances from $\fe$ to elements in $\cE_0(X)$, i.e., 
\begin{align*}
\DIS(\fe) = \{d(\fe, \fe_0) \; | \fe_0 \in \cE_0(X)\}.
\end{align*}

Note that the set $\DIS(\fe)$ only consists of finitely many distinct positive real numbers, say $d_1(\fe), \ldots, d_{r_{\fe}}(\fe)$ for some positive integer $r_{\fe} \ge 1$ such that 
\begin{align*}
d_1(\fe) < d_2(\fe) < \cdots < d_{r_{\fe}}(\fe).
\end{align*}
In experimental analysis, we always choose $k$ such that $k \le r_{\fe}$ for every $\fe \in \cE(X) \setminus \cE_0(X)$. 

We only choose the $k$ smallest values in $\DIS(\fe)$, say $d_1(\fe) < d_2(\fe) < \cdots < d_k(\fe)$, and let $\text{kNN}(\fe)$ denote the set of all elements $\fe_0 \in \cE_0(X)$ such that there exists an integer $\ell_{\fe_0}$ with $1 \le \ell_{\fe_0} \le k$ for which $d_{\ell_{\fe_0}} = d(\fe, \fe_0)$, i.e.,
\begin{align}
\label{kNN-set-in-methodology-section}
\text{kNN}(\fe) = \{\fe_0 \in \cE_0(X) \; | \text{$d(\fe, \fe_0) = d_{\ell_{\fe_0}}$ for some $1 \le \ell_{\fe_0} \le k$}\}
\end{align}

We propose a \textit{predictive model} for $W$, denoted by $\overline{W}$, as follows.
\begin{align*}
\overline{W}(\fe) = \dfrac{\sum_{\fe_0 \in \text{kNN}(\fe)} W(\fe_0)}{\card(\text{kNN}(\fe))},
\end{align*}
where $\card(\text{kNN}(\fe))$ denotes the number of elements in $\text{kNN}(\fe)$. 

In experimental analysis, we perform the modified kNN method introduced above with $d$ replaced by classes of metrics modulo certain equivalence relations which we construct on the set of hyperedges $\cE(X)$ in Subsection \ref{subsection-metrics-on-hyperedges}. In Table \ref{list-of-metrics-in-modified-kNN}, we list the class of metrics we use in the modified kNN methods.

\begin{table} 
\caption{Metrics modulo certain equivalence relation used in modified kNN methods}
\centering{}%
\begin{tabular}{|c|c|}
\hline 
  \multicolumn{1}{|c|}{The metric $d$ } \tabularnewline
 
\hline 
  $\cD_{h, \cB, +\infty }$ defined in (\ref{def-3rd-metric})  \\
  \hline
  
  $\cD_{h, \cB, \epsilon}$ defined in (\ref{def-4th-metric}) \tabularnewline
\hline
\end{tabular}
\label{list-of-metrics-in-modified-kNN}
\end{table}


$\star$ \textit{Label Prediction Problem}

In the label prediction problem, we assume that there is a subset, say $\cE_0(X)$ of $\cE(X)$ such that there is a map $L : \cE_0(X) \to \{1, \ldots, q\}$ for some integer $q \ge 2$. The value $L(x_0)$ is called the \textit{label of $x_0$}. The aim of the label prediction problem is to predict what possible values of $L(x)$ with $x \in \cE(X) \setminus \cE_0(X)$ are. We propose a modified kNN method for predicting labels of elements in $\cE(X) \setminus \cE_0(X)$ as follows. 

For each $\fe \in \cE(X) \setminus \cE_0(X)$, we define the set $\text{kNN}(\fe)$ as in (\ref{kNN-set-in-methodology-section}).

We propose a \textit{predictive model} for $L$, denoted by $\overline{L}$, as follows. Let $\cL(\text{kNN})(\fe)$ be the multi-set of labels $L(\fe_0)$ for $\fe_0 \in \text{kNN}(\fe)$. If there is a \textit{mode}, say $L(\fe_0)$, in $\cL(\text{kNN})(\fe)$ for some $\fe_0 \in \text{kNN}(\fe)$, we set $\overline{L}(\fe) = L(\fe_0)$; otherwise let $\overline{L}(\fe)$ be the \textit{closest integer} to the average $\dfrac{\sum_{\fe_0 \in \text{kNN}(\fe)}L(\fe_0)}{\card(\text{kNN}(\fe))}$.



 

\subsubsection{Embedded kNN methods}

In this subsection, we introduce our embedded kNN methods. In the construction of each metric in this work, we need to introduce a \textit{local geometry at each element} in the set $\cE(X)$. The local geometry at each element needs to contain the \textit{local structure} around each element which we want to study from the set of hyperedges $\cE(X)$. Thus the local geometry at a hyperedge gathers the local hypergraph structure around the hyperedge with respect to certain properties of hypergraphs that we want to investigate. In this work, we introduce the viewpoint that the local geometry at a hyperedge $\fe$ is a set of certain hyperedges in $\cE(X)$ which we consider as sharing \textit{similar features of $\fe$ with respect to certain properties}. We introduce two different types of local geometry in the set of hyperedges which we call \textit{type I and type II neighborhoods of hyperedges}, respectively.

Table \ref{list-of-neighborhoods-in-embedded-kNN} lists types of neighborhoods for hyperedges introduced in this paper.

\begin{table} 
\caption{Types of neighborhoods used in embedded kNN methods}
\centering{}%
\begin{tabular}{|c|}
 \hline
\multicolumn{1}{|c|}{Types of neighborhoods } \tabularnewline
\hline

Type I neighborhoods $\cN_{h, \cB, +\infty }$, defined in (\ref{def-Type-I-neighborhoods-3rd-metric}) \\
 \hline
 
 Type II neighborhoods $\cN_{h, \cB, \epsilon}$, defined in (\ref{def-Type-II-neighborhoods-3rd-metric}) \tabularnewline
\hline
\end{tabular}
\label{list-of-neighborhoods-in-embedded-kNN}
\end{table}


We first consider label prediction problem.

$\star$ \textit{Label Prediction Problem}

Let $\cE_0(X)$ of $\cE(X)$ for which each hyperedge in $\cE_0(X)$ is equipped with a label in $\{1, \ldots, q\}$, i.e., there is a map $L : \cE_0(X) \to \{1, \ldots, q\}$.

In this case, let $\cN_{\text{edge}}$ denote either the type I neighborhood $\cN_{h, \cB, +\infty }$ or type II neighborhood $\cN_{h, \cB, \epsilon}$ in Table \ref{list-of-neighborhoods-in-embedded-kNN}. We define the map $T_{\cE(X)} : \cE(X) \to \bR^q$ which represents each hyperedge as a point in $\bR^q$ as follows. For each $\fe \in \cE(X)$, 
\begin{align*}
T_{\cE(X)}(\fe) = (\eta_i(\cN_{\text{edge}}(\fe)), \ldots, \eta_q(\cN_{\text{edge}}(\fe))),
\end{align*}
where for each $1 \le i \le q$, $\eta_i(\cN_{\text{edge}}(\fe))$ denotes the number of hyperedges $\ff \in \cN_{\text{edge}}(\fe)$ such that $L(\ff) = i$. We call $T_{\cE(X)}$ the \textit{feature map of $\cE(X)$}. Note that the map $T_{\cE(X)}$ is well-defined since by our notion of neighborhoods of hyperedges, each hyperedge in the neighborhood $\cN_{\text{edge}}(\fe)$ belongs to the set $\cE_0(X)$, and thus one can associate a label to $\ff$, say $L(\ff)$. 

Under the map $T_{\cE(X)}$, the set of hyperedges $\cE(X)$ can be represented as a subset, say $T_{\cE(X)}(\cE(X))$, of $\bR^q$. For label prediction problem, we view the label of each element $T_{\cE(X)}(\fe) \in \bR^q$ for a given hyperedge $\fe$ is the same as that of $\fe \in \cE(X)$. In order to predict the label of a hyperedge $\fe$, we perform the classical kNN method for the set $T_{\cE(X)}(\cE(X)) \subseteq \bR^q$ to predict the label of $T_{\cE(X)}(\fe)$ which we view as the label of the hyperedge $\fe$.

$\star$ \textit{Weight Prediction Problem}

In this case, let $\cE_0(X)$ of $\cE(X)$ for which each hyperedge in $\cE_0(X)$ is equipped with a weight in an interval $[a, b]$ in $\bR$, i.e., there is a map $W : \cE_0(X) \to [a, b]$. For an integer $q \ge 2$, we associate a map $L : \cE_0(X) \to \{1, \ldots, q\}$ to $L$ as follows. We first divide the interval $[a, b]$ into $q$ sub-intervals of equal length such that $[a_0, a_1]$, $(a_1, a_2]$, \ldots, $(a_{q - 1}, a_q]$ where $a_0 = a$, $a_q = b$, and $a_i = a_{i - 1} + (b - a)/q$ for each $1 \le i \le q$. For each $\fe \in \cE_0(X)$, we let $L(\fe) = 1$ if $W(\fe) \in [a_0, a_1]$, and $L(\fe) = i$ if $i$ is the unique integer in $\{2, \ldots, q\}$ such that $W(\fe) \in (a_{i - 1}, a_i]$. Thus one obtains the map $L : \cE_0(X) \to \{1, \ldots, q\}$. 

Repeating the same arguments as in Label Prediction Problem, one obtains the feature map defined by 
\begin{align*}
T_{\cE(X)}(\fe) = (\eta_i(\cN_{\text{edge}}(\fe)), \ldots, \eta_q(\cN_{\text{edge}}(\fe))),
\end{align*}

Under the map $T_{\cE(X)}$, the set of hyperedges $\cE(X)$ can be represented as a subset, say $T_{\cE(X)}(\cE(X))$, of $\bR^q$. For weight prediction problem, we view the weight of each element $T_{\cE(X)}(\fe) \in \bR^q$ for a given hyperedge $\fe$ is the same as that of $\fe \in \cE(X)$. In order to predict the weight of a hyperedge $\fe$, we perform the classical kNN method for the set $T_{\cE(X)}(\cE(X)) \subseteq \bR^q$ to predict the weight of $T_{\cE(X)}(\fe)$ which we view as the weight of the hyperedge $\fe$.

In our experimental analysis, for simplicity, we always let $q = 2$.

\subsection{A class of metrics for the set of hyperedges of a hypergraph}
\label{subsection-metrics-on-hyperedges}

In this subsection, we introduce a class of metrics modulo certain equivalent relations on the set of hyperedges in a hypergraph. We consider incomplete hypergraphs introduced in Definition \ref{d-incomplete-hypergraphs} which is suitable for studying the weight (respectively, label) prediction problem for the set of hyperedges.

For the rest of this section, we fix an incomplete hypergraph $X = (\cV(X), \cE(X)$ such that there is a subset $\cE_0(X)$ of $\cE(X)$ and a map $F : \cE_0(X) \to A$, where $A$ is either an interval in $[a, b]$ in $\bR$ or a set of labels $\{1, \ldots, q\}$ for some integer $q \ge 2$. In weight prediction problem, $F$ is the weight map $W$, and $A$ is an interval $[a, b]$ whereas in label prediction problem, $F$ is the label map $L$, and $A = \{1, \ldots, q\}$.

We first introduce a notion of neighborhoods of hyperedges in $X$. For each $\fe \in \cE(X)$, denote by $s(\fe)$ the size of $\fe$, i.e., the number of vertices in $\fe$. Let $\cS$ denote the set consisting of the sizes of all hyperedges in $X$, i.e., $\cS = \{s(\fe) \; | \; \fe \in \cE(X)\}$. Let $\cM : \cS \to \bR_{> 0} \cup \{+\infty\}$ be a mapping which will be used to control the sizes of neighbors of hyperedges.

\begin{definition}
($(\cE_0(X), \cM)$-neighborhoods)

Let $\fe$ be a hyperedge in $\cE(X)$. The $(\cE_0(X), \cM)$-neighborhood of $\fe$ is defined by
\begin{align*}
\cN_{\cE_0(X), \cM}(\fe) = \{\ff \in \cE_0(X) \; | \; \text{$\card(\fe \cap \ff) \ge \lfloor s(\fe)/2\rfloor$ and $s(\ff) \le M(s(\fe))$}\},
\end{align*}
where $\fe \cap \ff$ is the set of vertices that $\fe$ and $\ff$ have in common, and $\lfloor \cdot \rfloor$ denotes the floor function.

\end{definition}

For the construction of a metric for an incomplete hypergraph, for each hyperedge $\fe$, we only focus on $(\cE_0(X), \cM)$-neighbors $\ff$ of $\fe$ such that the differences between the values $\sigma(\ff)$ and the predicted value of $F(\fe)$ are very small, and can be controlled using a \textit{tuning parameter} $h$. The smaller these differences become, the more precise the predicted value of $F(\fe)$ is.

We now introduce a notion of neighborhoods of hyperedges, depending on a tuning parameter $h$.

\begin{definition}

Let $h > 0$ be a tuning parameter. For each vertex $\fe \in \cE(X)$, set 
$$\abvg(\fe) = \dfrac{\sum_{\ff \in \cN_{\cE_0(X), \cM}(\fe)} F(\ff)}{\card(\cN_{\cE_0(X), \cM}(\fe))}.$$

Set
\begin{align*}
\cN_{h, \cE_0(X), \cM}(\fe) = \{\ff \in \cN_{\cE_0(X), \cM}(\fe) \; | \; |F(\ff) - \abvg(\fe)| \le h\}.
\end{align*}

\end{definition}

\begin{figure}
\includegraphics[width=9cm,height=5.5cm]{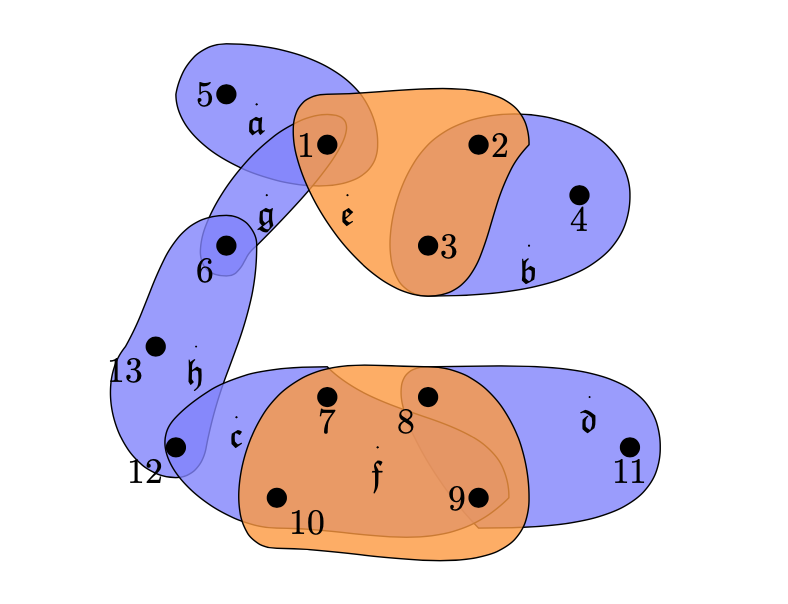}
\caption{Example of hypergraph with equivalent hyperedges.}
\centering
\label{Equal-hyperedge}
\end{figure}

For each $\fe \in \cE(X)$, set
\begin{align*}
\fC_{h, \cE_0(X), \cM}(\fe) = \card(\cN_{h, \cE_0(X), \cM}(\fe)),
\end{align*}
that is $\fC_{h, \cE_0(X), \cM}(\fe)$ is the number of elements in $\cN_{h, \cE_0(X), \cM}(\fe)$.

We introduce an equivalence relation on the set of hyperedges $\cE(X)$ which allows to identify certain hyperedges in $\cE(X)$. Note that if two hyperedges, say $\fe$ and $\ff$ satisfy $\cC_{h, \cE_0(X), \cM}(\fe) = \cC_{h, \cE_0(X), \cM}(\ff)$, then it is natural to view $\fe$ and $\ff$ as \textit{similar hyperedge} in weight (respectively, label) prediction problem since their neighborhood structures around the average mean of weights (respectively, labels) are very similar.

Hence it is natural to define a binary relation on $\cE(X)$ as follows:  \textit{two hyperedges $\fe$ and $\ff$  are \textit{equivalent}, denoted by $\fe \cong \ff$ if $\cC_{h, \cE_0(X), \cM}(\fe) = \cC_{h, \cE_0(X), \cM}(\ff)$}. One obtains the following.

\begin{proposition}

The binary relation ``$\cong$'' is an equivalence relation. 

\end{proposition}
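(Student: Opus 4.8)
The plan is to observe that $\cong$ is nothing more than the pullback of the equality relation on the nonnegative integers along the counting function $\cC_{h, \cE_0(X), \cM}$, and that such pullbacks are automatically equivalence relations. First I would record that for each $\fe \in \cE(X)$ the quantity $\cC_{h, \cE_0(X), \cM}(\fe) = \card(\cN_{h, \cE_0(X), \cM}(\fe))$ is a well-defined nonnegative integer, so that $\cC_{h, \cE_0(X), \cM}$ is a genuine map $\cE(X) \to \bZ_{\ge 0}$; by definition $\fe \cong \ff$ holds precisely when $\cC_{h, \cE_0(X), \cM}(\fe) = \cC_{h, \cE_0(X), \cM}(\ff)$.

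With this reformulation, I would verify each of the three axioms of Definition \ref{Def-equivalence-relation} by reducing it to the corresponding property of ordinary equality on $\bZ_{\ge 0}$. Reflexivity follows because $\cC_{h, \cE_0(X), \cM}(\fe) = \cC_{h, \cE_0(X), \cM}(\fe)$ holds trivially, whence $\fe \cong \fe$. Symmetry follows because if $\fe \cong \ff$, then $\cC_{h, \cE_0(X), \cM}(\fe) = \cC_{h, \cE_0(X), \cM}(\ff)$, and interchanging the two sides of this equality gives $\cC_{h, \cE_0(X), \cM}(\ff) = \cC_{h, \cE_0(X), \cM}(\fe)$, i.e.\ $\ff \cong \fe$. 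Transitivity follows because if $\fe \cong \ff$ and $\ff \cong \fg$, then $\cC_{h, \cE_0(X), \cM}(\fe) = \cC_{h, \cE_0(X), \cM}(\ff)$ and $\cC_{h, \cE_0(X), \cM}(\ff) = \cC_{h, \cE_0(X), \cM}(\fg)$, and chaining these two equalities yields $\cC_{h, \cE_0(X), \cM}(\fe) = \cC_{h, \cE_0(X), \cM}(\fg)$, so $\fe \cong \fg$.

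I do not expect any genuine obstacle here: the entire content is the elementary fact that equality on $\bZ_{\ge 0}$ is reflexive, symmetric, and transitive, transported through a fixed function. The only point I would take care to confirm is that $\cC_{h, \cE_0(X), \cM}$ is well-defined as a function of $\fe$, which is immediate since $\cN_{h, \cE_0(X), \cM}(\fe)$ is a uniquely determined subset of $\cE_0(X)$ once $\fe$, $h$, $\cE_0(X)$, and $\cM$ are fixed, so that its cardinality is a single well-defined number. Granting this, all three conditions of Definition \ref{Def-equivalence-relation} hold, and $\cong$ is an equivalence relation.
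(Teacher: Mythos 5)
Your proof is correct: the paper states this proposition without any proof (treating it as immediate), and your argument---that $\cong$ is the pullback of equality on $\bZ_{\ge 0}$ along the counting function $\cC_{h, \cE_0(X), \cM}$, so reflexivity, symmetry, and transitivity are inherited from equality---is exactly the intended justification. This fills in the omitted verification with the natural approach, so there is nothing to compare beyond noting that your writeup is complete where the paper is silent.
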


\begin{example}

Firgure \ref{Equal-hyperedge} is an example of hypergraph that indicates which hyperedges are $\cong_3$-equivalent. 

In this example, $\cE_0(X)=\{\fa, \fb, \fc,\fd, \fg, \fh\}$. The weights of $\fa, \fb, \fc,\fd, \fg, \fh$ are $0.4, 0.5, 0.35, 0.46, -0.2, -0.15$ respectively. Let $\cM(s(\fe)) = \dfrac{4}{3}s(\fe)$ for each $\fe \in \cE(X)$. Then $\cC_{h, \cE_0(X), \cM}(\fe)=\cC_{h, \cE_0(X), \cM}(\ff)=2$, which indicates $\fe \cong \ff$.

\end{example}

Now we define a mapping $\cD_{h, \cE_0(X), \cM} : \cE(X) \times \cE(X) \to \bR_{\ge 0}$ as follows. For hyperedges $\fe, \ff$, define
\begin{align}
\label{d-metric-edges}
\cD_{h, \cE_0(X), \cM}(\fe, \ff) = |\fC_{h, \cE_0(X), \cM}(\fe) - \fC_{h, \cE_0(X), \cM}(\ff)|.
\end{align}  

\begin{theorem}
\label{3rd-main-theorem}

$\cD_{h, \cE_0(X), \cM}$ is a metric modulo the equivalence relation $\cong$.

\end{theorem}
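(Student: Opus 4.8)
The plan is to recognize that $\cD_{h, \cE_0(X), \cM}$ is nothing but the pullback of the standard absolute-value metric on $\bR$ along the integer-valued counting function $\fe \mapsto \fC_{h, \cE_0(X), \cM}(\fe)$, so that all four axioms of Definition \ref{def-metric} reduce to elementary properties of $|\cdot|$. First I would record that the map is well-defined and $\bR$-valued: since $\cN_{h, \cE_0(X), \cM}(\fe) \subseteq \cE_0(X)$, its cardinality $\fC_{h, \cE_0(X), \cM}(\fe)$ is a finite non-negative integer, so $|\fC_{h, \cE_0(X), \cM}(\fe) - \fC_{h, \cE_0(X), \cM}(\ff)|$ is a well-defined non-negative real number, giving axiom (i) immediately from $|x| \ge 0$.

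For axiom (ii), which is the only step where the specific definition of $\cong$ enters, I would observe that $\cD_{h, \cE_0(X), \cM}(\fe, \ff) = 0$ holds precisely when $\fC_{h, \cE_0(X), \cM}(\fe) = \fC_{h, \cE_0(X), \cM}(\ff)$, and this equality is by definition the relation $\fe \cong \ff$; here I invoke the preceding Proposition that $\cong$ is genuinely an equivalence relation. Symmetry (iii) follows from $|t| = |-t|$ applied to $t = \fC_{h, \cE_0(X), \cM}(\fe) - \fC_{h, \cE_0(X), \cM}(\ff)$. For the triangle inequality (iv), I would fix a third hyperedge $\fg \in \cE(X)$ and apply the triangle inequality for the absolute value on $\bR$ to the three real numbers $\fC_{h, \cE_0(X), \cM}(\fe)$, $\fC_{h, \cE_0(X), \cM}(\ff)$, $\fC_{h, \cE_0(X), \cM}(\fg)$, namely
\begin{align*}
|\fC_{h, \cE_0(X), \cM}(\fe) - \fC_{h, \cE_0(X), \cM}(\ff)| \le |\fC_{h, \cE_0(X), \cM}(\fe) - \fC_{h, \cE_0(X), \cM}(\fg)| + |\fC_{h, \cE_0(X), \cM}(\fg) - \fC_{h, \cE_0(X), \cM}(\ff)|,
\end{align*}
which is exactly the desired inequality $\cD_{h, \cE_0(X), \cM}(\fe, \ff) \le \cD_{h, \cE_0(X), \cM}(\fe, \fg) + \cD_{h, \cE_0(X), \cM}(\fg, \ff)$.

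There is essentially no analytic obstacle here: once one sees that the map factors through $\fC_{h, \cE_0(X), \cM}$ followed by the absolute value, every axiom is forced. The only genuine content — and the single place a reader should pause — is matching axiom (ii) with the definition of $\cong$, i.e.\ confirming that $\cong$ was constructed exactly so that $\cD_{h, \cE_0(X), \cM}(\fe, \ff) = 0$ is equivalent to $\fe \cong \ff$. The sole point of care is ensuring that $\fC_{h, \cE_0(X), \cM}$ is finite so that the absolute value is defined, which follows from the finiteness of each neighborhood $\cN_{h, \cE_0(X), \cM}(\fe) \subseteq \cE_0(X)$.
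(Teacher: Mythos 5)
Your proof is correct and follows essentially the same route as the paper's: both reduce each axiom of Definition \ref{def-metric} to elementary properties of the absolute value on $\bR$ applied to the differences of the counts $\fC_{h, \cE_0(X), \cM}$, with axiom (ii) holding by the very definition of $\cong$ and the triangle inequality obtained from that of $|\cdot|$. Your ``pullback of the metric along the counting function'' framing and the remark on finiteness of the neighborhoods are nice touches of rigor, but they do not change the substance of the argument.
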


\begin{proof}

It is clear that $\cD_{h, \cE_0(X), \cM}(\fe, \ff) \ge 0$ which shows that (i) in Definition \ref{def-metric} holds.

 Assume that $\cD_{h, \cE_0(X), \cM}(\fe, \ff)$ for some $\fe, \ff \in \cE(X)$. Then it follows from (\ref{d-metric-edges}) that $fC_{h, \cE_0(X), \cM}(\fe) = \fC_{h, \cE_0(X), \cM}(\ff)$ which implies that $\fe \cong \ff$. Thus (ii) in Definition \ref{def-metric} holds. 

It is obvious that $\cD_{h, \cE_0(X), \cM}(\fe, \ff) = \cD_{h, \cE_0(X), \cM}(\ff, \fe)$, and thus (iii) in Definition \ref{def-metric} holds.

Let $\fe, \ff, \fg$ be hyperedges in $\cE(X)$. We see that
\begin{align*}
\cD_{h, \cE_0(X), \cM}(\fe, \fg) &= |\fC_{h, \cE_0(X), \cM}(\fe) - \fC_{h, \cE_0(X), \cM}(\fg)| \\
&= |(\fC_{h, \cE_0(X), \cM}(\fe) - \fC_{h, \cE_0(X), \cM}(\ff)) + (\fC_{h, \cE_0(X), \cM}(\ff)  - \fC_{h, \cE_0(X), \cM}(\fg)) | \\
&\le  |\fC_{h, \cE_0(X), \cM}(\fe) - \fC_{h, \cE_0(X), \cM}(\ff)| + |\fC_{h, \cE_0(X), \cM}(\ff) - \fC_{h, \cE_0(X), \cM}(\fg)| \\
&= \cD_{h, \cE_0(X), \cM}(\fe, \ff) + \cD_{h, \cE_0(X), \cM}(\ff, \fg).
\end{align*}
Therefore (iv) in Definition \ref{def-metric} holds, and thus $\cD_{h, \cE_0(X), \cM}$ is a metric modulo the equivalence relation $\cong$.

\end{proof}

Depending on the range of the values of $\cM$, one can obtain different types of neighborhoods of hyperedges, and different versions of the metric $\cD_{h, \cE_0(X), \cM}$. Two distinguished cases which we study in the paper are as follows.

\begin{definition}
\label{def-Type-I-neighborhoods-3rd-metric}
(Type I neighborhoods of hyperedges)

Let $\cM(s(\fe)) = +\infty$ for each hyperedge $\fe \in \cE(X)$. For $\fe \in \cE(X)$, following the construction of the neighborhood $\cN_{h, \cE_0(X), \cM}(\fe)$ with $\cM(s(\fe))$ replaced by $+\infty$, one obtains a new type of neighborhood of $\fe$, denoted by $\cN_{h, \cE_0(X), +\infty}(\fe)$, which we call the \textit{Type I neighborhood of $\fe$}.

\end{definition}

\begin{definition}
\label{def-3rd-metric}
(A first metric on $\cE(X)$)

Let $\cM(s(\fe)) = +\infty$ for each hyperedge $\fe \in \cE(X)$. Following the construction of the metric $\cD_{h, \cE_0(X), \cM}$ with $\cM(s(\fe))$ replaced by $+\infty$, one obtains a metric modulo the equivalence relation $\cong$ which we denote by $\cD_{h, \cE_0(X), +\infty}$.

\end{definition}

The key feature of the metric $\cD_{h, \cE_0(X), +\infty}$ is that it does not control the sizes of neighbors of hyperedges. So it contains all weight information from the neighborhoods of hyperedges in its definition without \textit{filtering which weights are possibly important for prediction}.

\begin{definition}
\label{def-Type-II-neighborhoods-3rd-metric}
(Type II neighborhoods of hyperedges)

Let $\cM(s(\fe)) = \epsilon s(\fe)$for each hyperedge $\fe \in \cE(X)$. For $\fe \in \cE(X)$, following the construction of the neighborhood $\cN_{h, \cE_0(X), \cM}(\fe)$ with $\cM(s(\fe))$ replaced by $\epsilon s(\fe)$, one obtains a new type of neighborhood of $\fe$, denoted by $\cN_{h, \cE_0(X), \epsilon}(\fe)$, which we call the \textit{Type II neighborhood of $\fe$}.

\end{definition}

\begin{definition}
\label{def-4th-metric}
(A second metric on $\cE(X)$)

Let $\cM(s(\fe)) = \epsilon s(\fe)$ for each hyperedge $\fe \in \cE(X)$ where $\epsilon$ is a constant in the interval $(0, 2]$. Following the construction of the metric $\cD_{h, \cE_0(X), \cM}$ with $\cM(s(\fe))$ replaced by $\epsilon s(\fe)y$, one obtains a metric modulo the equivalence relation $\cong$ which we denote by $\cD_{h, \cE_0(X), \epsilon}$.

\end{definition}

\section{Experimental analysis}
\label{section-experimental-analysis}

In this section, we apply the methodology  proposed in Subsection \ref{subsection-methodology} combined with the class of metrics in  Subsection \ref{subsection-metrics-on-hyperedges} to construct predictive models for weight (respectively, label) prediction problem for hypergraph datasets.

We are not aware of any hypergraph datasets containing  weights (respectively, labels) of  hyperedges. So in our experimental analysis, we use datasets of weighted bipartite networks, and transform them into hypergraphs. 

Let $G=(U, V, E)$ be a bipartite network, where $U$ and $V$ are disjoint sets of vertices, $E$ is the set of edges. For each $e \in E$, $e$ is uniquely formed by a pair of vertices $(u, v)$ where $u \in U$ and $v \in V$. \textit{A hypergraph $X=(\cV(X), \cE(X))$ induced by the bipartite graph $G$} is defined as follows. 
\begin{itemize}

\item[(i)] By exchanging the notation between $U$ and $V$, one, without loss of generality, lets $\cV(X)= U$, that is, the set of vertices in $X$ is defined to be one set of vertices in $G$.

\item[(ii)]  For each $v \in V$, let $\fe_v$ denotes the set of all vertices $u$ in $U$ such that $(u, v) \in E$. We define the set of hyperedges of $X$ as
\begin{align*}
\cE(X) = \{\fe_v \; | \; v \in V\}.
\end{align*}

In this case, each hyperedge $\fe$ in $\cE(X)$ uniquely corresponds to a vertex in $V$. And thus $\cE(X)$ is in bijection with $V$. 

\end{itemize}

In our experimental analysis, we use two real-world datasets from social networks which are Epinions network and MovieLens network. They are both bipartite rating networks where one set of vertices are users and the other set of vertices are items. The weight of an edge presents a rating score from a user to an item. In the hypergraphs induced by these two networks, the set of vertices  $\cV(X)$ is the set of users, then each hyperedge is formed by the set of all users that rate the same item. Therefore, each hyperedge corresponds to an item in the bipartite graph. An example of transforming a bipartite graph into a hypergraph is shown in Figure \ref{bi-graph} and \ref{hypergraph based on bipartite graph}. Descriptions of the two datasets are as follows.

\begin{itemize} 
\item \textbf{Epinions.} This dataset was collected by Paola Massa in a $5$-week crawl (Novemeber/December 2003) from the Epinions.com Website (see the dataset at \url{http://www.trustlet.orgdownloaded_epinions.html} )(see \cite{MSST}). In Epinions, the vertices  are users and reviews, each user rates the helpfullness of a review on a $1-5$ scale, where $1$ means totally not helpful and $5$ means totally helpful. 

\item \textbf{MovieLens.}  This dataset contains movie ratings collected from \url{https://movielens.org/} (see \cite{HK}). In this dataset, the vertices are users and movies, each edge connects a user with a movie and represents a rating score. The rating scores are integers from $0$ to $5$. 

\end{itemize}

Table \ref{Des-hypergraphs} contains descriptions of hypergraphs used in our computation. The weights of hyperedges are computed using the edge weights from bipartite network datasets and  the notion of goodness introduced in \cite{SSSF}. In both of the two hypergraphs, each hyperedge corresponds to an item in the bipartite graphs,  the weight of each hyperedge, which is computed using goodness, indicates how good an item is. In order to associate each hyperedge with a  label, we divide the range of weights, which is a closed interval from minimum value of weights to maximum value of weights,  into $q$ intervals with equal length. Then the label of a hyperedge is $i$ if its weight belongs to the $i$th interval. The set of labels are integers from $1$ to $q$. 

According to the definitions of neighborhoods, one needs to select  tuning parameters $h$ for  $\cN_{h,\cE_0(X), +\infty}(\fe)$, and $ \cN_{h,\cE_0(X), \epsilon}(\fe)$. In our computation, we set the value of  $h$ based on the neighborhood of each hyperedge. For each $\fe \in \cE$ , $h$ equals to the standard deviation of weights of all hyperedges that belong to the neighborhood of $\fe$.  The choices of $k$ for $k$NN are integers from $1$ to $20$, and the values of $\epsilon$ are set to be $\frac{5}{3}, \frac{4}{3}, 1, \frac{2}{3}$, or $\frac{1}{2}$. The performances of weight predictions are evaluated using mean of absolute error (MAE) and root mean squared error (RMSE). The performances of label predictions are evaluated using error rate which is computed by the proportion of incorrect prediction. We select the values of $k$ and $\epsilon$ that  correspond to the smallest MAE or error rate. 

Figures \ref{mae-epsilon} and \ref{rmse-epsilon} present the results of predicting weights of hyperedges using Epinions dataset. In this example, the smallest MAE and RMSE values are obtained by setting $\epsilon$ equal to $\frac{5}{3}$. In general, using the second type of neighborhood ($ \cN_{h,\cE_0(X), \epsilon}(\fe)$) leads to a better prediction results than using the first type of neighborhood ($\cN_{h,\cE_0(X), +\infty}(\fe)$).

Tables \ref{pred-weight-modified} and \ref{pred-weight-embedded} present the results of predicting weights of  hyperedges using modified $k$NN method and embedded $k$NN method. In our experimental analysis, for simplicity, we set $q=2$ for the computation of $T_{\cE(X)}(\fe)=(\eta_1(\cN_{\text{edge}}(\fe)), \ldots, \eta_q(\cN_{\text{edge}}(\fe)))$ in the embedded $k$NN method, where $\cN_{\text{edge}}(\fe)$ is either type I or type II neighborhood of $\fe$. Each cell reports a pair of numbers (mean of absolute error (MAE), root mean squared error (RMSE)).

Table \ref{pred1-label-modified} presents the error rates of predicting labels of hyperedges using modified $k$NN methods. Table \ref{pred2-label-embedded}  presents the error rates of predicting labels of hyperedges using embedded $k$NN methods.


\begin{figure}
\centering
\includegraphics[width=9cm,height=10cm]{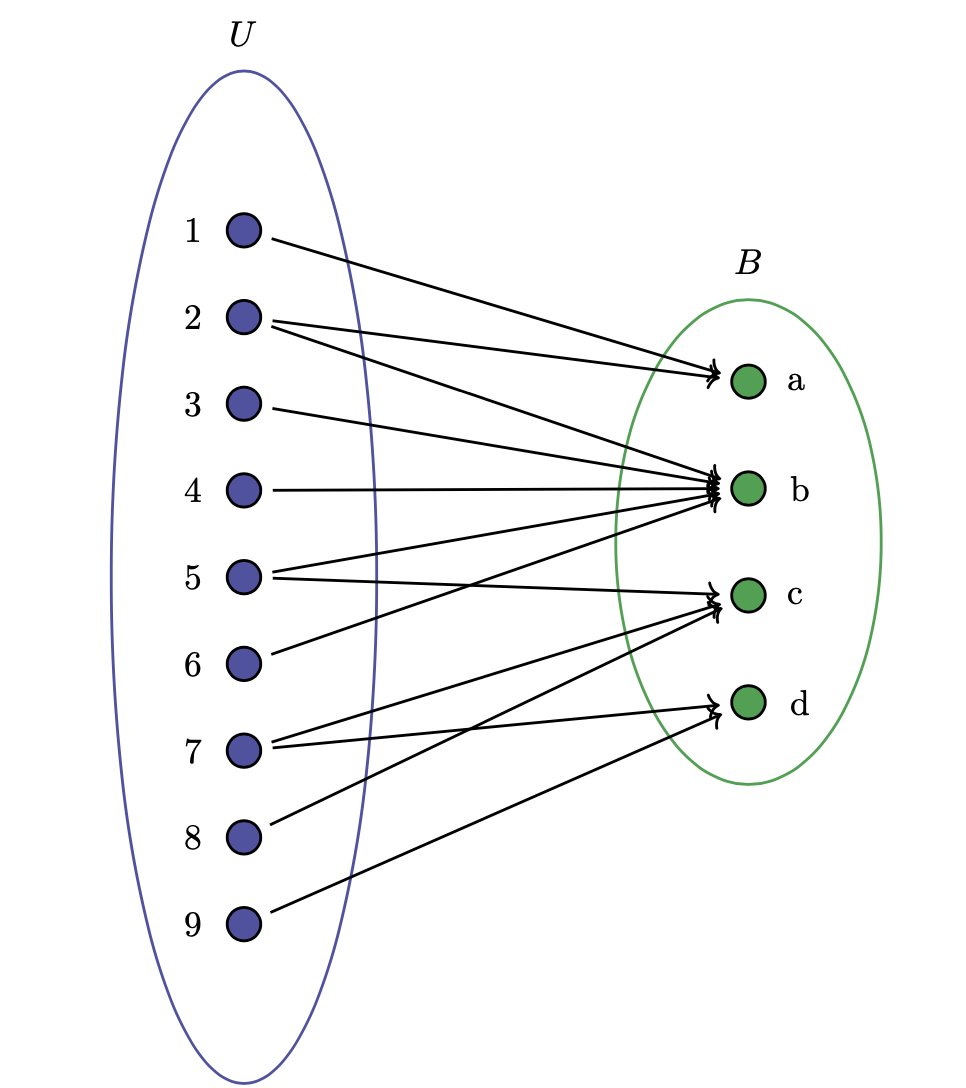}
\begin{minipage}{0.65\textwidth} 
  {\footnotesize This is a subgraph of the Epinions network. $U$ stands for the set of users from $1$ to $9$, $B$ 
stands for the set of books from $a$ to $d$.\par}
  \caption{Example of bipartite graph}
  \label{bi-graph}
\end{minipage}
\end{figure}

\begin{figure}
\includegraphics[width=9cm,height=4.5cm]{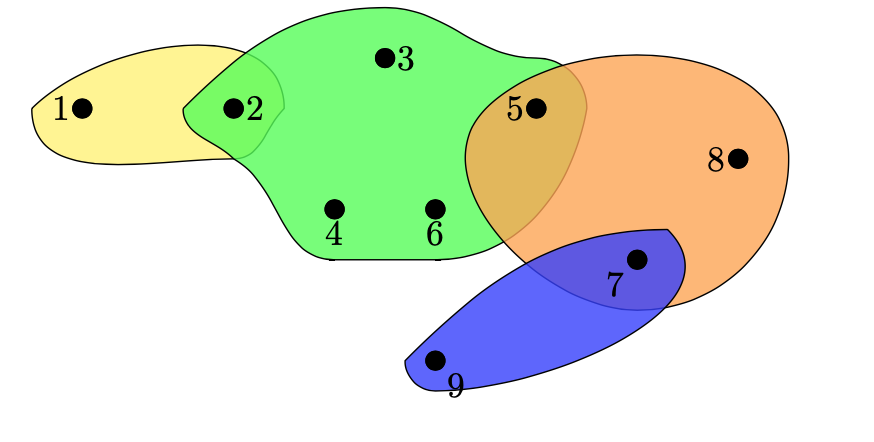}
\caption{Hypergraph based on the bipartite graph in Figure \ref{bi-graph}}
\label{hypergraph based on bipartite graph}
\centering
\end{figure}


\begin{figure}
\includegraphics[width=13cm,height=8cm]{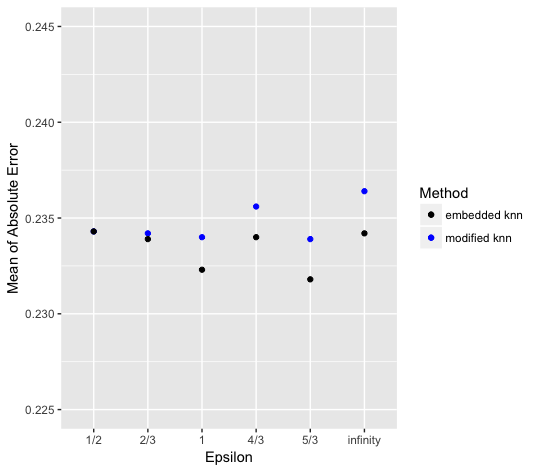}
\caption{MAE of predicting weights of hyperedges using different values of $\epsilon$.}
\centering
\label{mae-epsilon}
\end{figure}


\begin{figure}
\includegraphics[width=13cm,height=8cm]{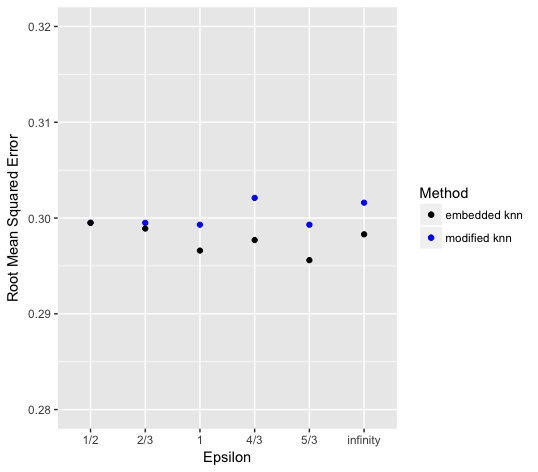}
\caption{RMSE of predicting weights of hyperedges using different values of $\epsilon$.}
\centering
\label{rmse-epsilon}
\end{figure}

\begin{table}
\caption{Descriptions of hypergraphs constructed using each dataset}
\centering{}%
\begin{tabular}{|c|c|c|c|}
\hline
& number of hyperedges & number of vertices & $s(\fe)$ :Size of hyperedge $\fe$ \tabularnewline
 \hline
 Epinions& 33774  & 32610 & $2\leq s(\fe) \leq 918$ \tabularnewline
 \hline
 MovieLens& 7498& 75106  &  $2 \leq s(\fe) \leq 687$    \tabularnewline
 \hline
 \end{tabular}
 \label{Des-hypergraphs}
\end{table}

\begin{table}
\caption{Prediction of weights of hyperedges using modified $k$NN }
\centering{}%
\begin{tabular}{|c|c|c|c|c|}
\hline
Metric&  $\cD_{h,\cE_0(X), +\infty }$ & $\cD_{h,\cE_0(X), \epsilon}$ \tabularnewline
\hline 
Epinions & (0.236, 0.301)& (0.234, 0.299) \tabularnewline
\hline 
MovieLens&(0.172, 0.217) & (0.184, 0.232)\tabularnewline
\hline 
\end{tabular}
 \label{pred-weight-modified}
\end{table}

\begin{table}
\caption{Prediction of weights  of hyperedges using embedded $k$NN }
\centering{}%
\begin{tabular}{|c|c|c|c|c|}
\hline 
 Neighborhood& $\cN_{\cE_0(X), +\infty }(\fe)$ & $\cN_{\cE_0(X), \epsilon}(\fe)$ \tabularnewline\hline 
Epinions &(0.234, 0.298) & (0.232, 0.296) \tabularnewline
\hline 
MovieLens&(0.171, 0.217) &(0.183, 0.231)\tabularnewline
\hline 
\end{tabular}
 \label{pred-weight-embedded}
\end{table}

\begin{table}
\caption{Error rates of predicting labels using modified $k$NN}
\centering{}%
\begin{tabular}{|c|c|c|c|c|}
\hline
Dataset &  \multicolumn{2}{c|}{Epinions}   & \multicolumn{2}{c|}{MovieLens}\tabularnewline
\hline
Metric&$\cD_{h,\cE_0(X), +\infty }$ & $\cD_{h,\cE_0(X), \epsilon}$ &$\cD_{h,\cE_0(X), +\infty }$ & $\cD_{h,\cE_0(X), \epsilon}$ \tabularnewline
\hline 
2 labels &0.182 (k=1) & 0.176 (k=1)& 0.250 (k=1) & 0.250 (k=4)  \tabularnewline
\hline 
3 labels & 0.316 (k=3) & 0.306(=2) & 0.296 (k=1) &0.296 (k=1)   \tabularnewline
\hline 
4 labels & 0.406 (k=4) & 0.412 (k=1) &  0.303 (k=6)& 0.303 (k=4)\tabularnewline
\hline 
5 labels & 0.507 (k=3)& 0.512 (k=4) & 0.535 (k=5)& 0.532 (k=1) \tabularnewline
\hline 

\end{tabular}
\label{pred1-label-modified}
\end{table}

\begin{table}
\caption{Error rates of predicting labels using embedded $k$NN}
\centering{}%
\begin{tabular}{|c|c|c|c|c|}
\hline
Dataset &  \multicolumn{2}{c|}{Epinions}   & \multicolumn{2}{c|}{MovieLens}\tabularnewline
\hline 
 Neighborhood&$\cN_{\cE_0(X), +\infty }(\fe)$ & $\cN_{\cE_0(X), \epsilon}(\fe)$ &$\cN_{\cE_0(X), +\infty }(\fe)$ & $\cN_{\cE_0(X), \epsilon}(\fe)$ \tabularnewline
\hline 
2 labels &0.182(k=2)&  0.176 (k=2) &  0.252 (k=19) & 0.249 (k=13)\tabularnewline
\hline 
3 labels &0.323 (k=2) &0.319 (k=5) &0.295 (k=10) & 0.296 (k=3)\tabularnewline
\hline 
4 labels &0.406 (k=10) &0.412 (k=7) & 0.304 (k=15) & 0.303 (k=10) \tabularnewline
\hline 
5 labels &0.511 (k=5) & 0.520 (k=2) &0.519 (k=15) & 0.527 (k=1) \tabularnewline
\hline 
\end{tabular}
 \label{pred2-label-embedded}
\end{table}

\end{document}